\def\x{{\mathbf x}}
\def\y{{\mathbf y}}
\def\b{{\mathbf b}}
\def\W{{\mathbf W}}
\def\M{{\mathbf M}}
\def\V{{\mathbf V}}
\def\Tr{{\rm Tr}}
\newcommand{\R}{\mathbb{R}}
\newtheorem*{thm}{Theorem}
\title{A Spiking Neural Network with Local Learning Rules Derived From Nonnegative Similarity Matching}
\name{Cengiz Pehlevan}
\address{John A. Paulson School of Engineering and Applied Sciences, \\Harvard University, Cambridge, MA 02138, USA}
\begin{document}
%
\maketitle
\begin{abstract}
The design and analysis of spiking neural network algorithms will be accelerated by the advent of new theoretical approaches. In an attempt at such approach, we provide a principled derivation of a spiking algorithm for unsupervised learning, starting from the nonnegative similarity matching cost function. The resulting network consists of integrate-and-fire units and exhibits local learning rules, making it biologically plausible and also suitable for neuromorphic hardware. We show in simulations that the algorithm can perform sparse feature extraction and manifold learning, two tasks which can be formulated as nonnegative similarity matching problems.
\end{abstract}
\begin{keywords}
nonnegative similarity matching, spiking neural networks, online optimization
\end{keywords}
\section{Introduction}

While our brains serve as an evidence of the capability and the efficiency of spike-based computation, despite the recent progress in neural networks research \cite{lecun2015deep}, spiking neural networks (SNNs) \cite{schuman2017survey} still remain largely unexplored and underutilized. This is partly because SNNs pose new challenges in theoretical understanding compared to neural networks with analogue units (AUNNs). Novel analytical methods would advance the design and analysis of SNN algorithms. 

In this paper, we present a theoretically principled approach to designing SNN algorithms that learn representations from data in an unsupervised manner. Differing from modern deep learning methods \cite{lecun2015deep}, we not only ``derive" the learning rules but also the dynamics and the architecture of an SNN from a cost function, whose optimization describes a learning task. This approach results in efficient SNNs specialized to solve the task. Access to a cost function allows prediction of the SNN algorithm's behavior on different datasets.

A challenge to the derivation of SNNs is the desired locality of an SNN's learning rules, i.e. SNN synaptic updates should depend only on the activities of the pre- and post-synaptic units. Local updates are necessary for implementability on neuromorphic hardware \cite{davies2018loihi} and also for biological plausibility. But, how could such uninformed learning be optimal in any sense? Naively, if the synapse had access to the activity of other units, it could make a better update. Indeed, existing derivations of learning SNNs ended up with non-local learning rules, starting from various cost functions and optimizing by gradient methods. To arrive at local learning, authors either resorted to approximations \cite{vertechi2014unsupervised,gilra2017predicting,alemi2018learning}, or to a contrastive learning procedure \cite{lin2018dictionary}. 

To solve the locality problem, we will adopt a technology that was recently developed for deriving AUNNs: similarity-based cost functions \cite{pehlevan2015MDS}. This family of costs lead to AUNNs with local learning rules when optimized by gradient methods  \cite{pehlevan2015MDS,pehlevan2018similarity}. We will focus on a particular similarity-based cost function, nonnegative similarity matching (NSM) \cite{kuang2012symmetric,pehlevan2014NMF}, because of its versatile usability across many learning tasks. 

In order to introduce the NSM problem, we assume the input data be a set of vectors, $\x_{t=1,\ldots,T}\in \R^n$ and the output be another set $\y_{t=1,\ldots, T}\in \R^k$. Taking dot product as a similarity measure, NSM aims to learn a representation where the similarities between output vector pairs match that of the input pairs, subject to nonnegativity constraints and regularization: 
\begin{align}\label{rNSM}
    \min_{\forall \y_t \geq 0}\, &\frac 1{2T^2}\sum_{t=1}^T\sum_{t'=1}^T  \left(\x_t^\top\x_{t'}- \y_{t}^\top \y_{t'}-\alpha^2\right)^2 \nonumber \\ &+ \frac{2\lambda_1}{T} \sum_{t=1}^T \left \Vert \y_t\right\Vert_1 + \frac{\lambda_2}{T} \sum_{t=1}^T\left\Vert \y_t \right \Vert_2^2.
\end{align}
Without the regularizers ($\alpha=\lambda_1= \lambda_2 =0$), this cost function was used for clustering  \cite{kuang2012symmetric,pehlevan2014NMF,bahroun2017neural}, sparse encoding and feature extraction \cite{pehlevan2014NMF,bahroun2017online}, and blind nonnegative source separation  \cite{pehlevan2017blind}. With $\alpha$ and $\lambda_2$ turned on, it was used for manifold learning \cite{sengupta2018manifold}. We also include an $l_1$-norm regularization for the capability of increased sparsity in the output \cite{tibshirani1996regression,zou2005regularization,hu2014SMF}. Overall, an SNN that solves the NSM problem will be a versatile tool for learning representations from data.  

A second challenge to the derivation of SNNs is the spiking dynamics. Recent work showed how to design and derive spike-based optimization algorithms to solve certain optimization problems \cite{hu2012network,shapero2014optimal,boerlin2011spike,tang2017sparse} in non-learning settings. In order to derive a learning SNN with local learning rules, we will use one of these spike-based algorithms, that of \cite{tang2017sparse}, to optimize the NSM cost function \eqref{rNSM}. We call the resulting algorithm Spiking NSM.

The rest of the paper is organized as follows. In Section \ref{AUNN}, we show how an AUNN with local learning rules can be derived from NSM. Building on these results, in Section \ref{SNN}, we derive the Spiking NSM algorithm. In Section \ref{Numerics}, we present our numerical experiments on sparse encoding and feature extraction, and manifold learning. We conclude in Section \ref{DC}.

\section{Derivation of an AUNN algorithm from the NSM cost function}\label{AUNN}

We start by deriving an AUNN from the cost in \eqref{rNSM}. While our presentation mostly follows previous accounts \cite{pehlevan2014NMF,pehlevan2018similarity}, there are important variations and novelties that enable the derivation of the Spiking NSM algorithm in the next section.

There are immediate problems with deriving an AUNN from \eqref{rNSM}. It has only inputs and outputs, but not synaptic weights. A neural network operates in an online fashion, producing an output $\y_t$ immediately after seeing an input $\x_t$, but, in \eqref{rNSM}, pairs of inputs and outputs from different time points interact with each other.

These problems can be solved by following the procedure described in \cite{pehlevan2018similarity}. Starting from the NSM cost \eqref{rNSM}, we obtain a dual min-max objective by introducing new auxiliary variables $\W\in\R^{k\times n}$, $\M \in \R^{k\times k}$, and $\b\in \R^k$, which will be interpreted as synaptic weights shortly:
%
\begin{align}\label{rNSM_stoch}
    \min_{\W\in\R^{k\times n}} \max_{\M \in \R^{k\times k}} \max_{\b\in \R^k}\frac 1T \sum_{t=1}^T l_t(\W,\M,\b),
\end{align}
where
\begin{align}
    l_t &:= \Tr \, \W^\top \W - \frac 12 \Tr \, \M^\top \M -  \left\Vert \b\right\Vert_2^2 \nonumber \\ &\qquad\qquad\qquad\qquad\qquad\qquad + \min_{\y_t \geq 0} h_t(\W,\M,\b,\y_t), \nonumber \\
    h_t &:= -2\y_t^\top \left(\W \x_t-\alpha\b\right) + \y_t^\top \M \y_t \nonumber \\
    &\qquad \qquad\qquad\qquad\quad\quad + 2\lambda_1 \left \Vert \y_t\right\Vert_1+ \lambda_2\left\Vert \y_t \right \Vert_2^2.
\end{align}
The new objective \eqref{rNSM_stoch} is equivalent to \eqref{rNSM} upto a change in the order of optimization, which can be seen by plugging back the optimal values of $\W^* = \frac 1T \sum_{t}\y_t \x_t^\top$, $\M^* = \frac 1T \sum_{t}\y_t\y_t^\top$ and $\b^*=\frac{\alpha}{T}\sum_t \y_t$. 

The min-max objective allows an online NSM algorithm because the objective is factorized into a summation of terms, $l_t$, in a way that pairs of inputs and outputs from different time points are decoupled. For each input ${\bf x}_t$, we use a two-step alternating optimization procedure \cite{olshausen1996emergence,arora2015simple} on $l_t$ to produce an output ${\bf y}_{t}$ and update variables $\W$, $\M$, and $\b$. We now discuss these steps, and how they map to the operations of an AUNN with local learning rules.

\subsection{Solving for outputs with an AUNN}

The first step of the alternating optimization is minimizing $h_t$ (and $l_t$) with respect to nonnegative $\y_t$, while keeping $\W$, $\M$, and $\b$ fixed\footnote{Note that this is a nonnegative elastic net problem \cite{zou2005regularization}.}. Define $\bar \M := \M-\text{diag}\left(\M\right)$, where $\text{diag}$ operator sets off-diagonal elements of a matrix to zero. Then, the following dynamical system minimizes $h_t$\footnote{It is easy to show that for this dynamics $\frac{d h_t}{d\tau} \leq 0$. A rigorous discussion of convergence can be found in \cite{tang2016convergence} (see the Conclusion section of \cite{tang2016convergence}).}:
\begin{align}\label{rDyn}
   \frac{d u_{i}(\tau)}{d\tau} &= -u_{i}(\tau) + \left[\W\x_t\right]_i - \alpha b_i -\left[\bar \M \y_t(\tau)\right]_i,\nonumber \\
    y_{t,i}(\tau) &= g_i(u_{i}(\tau)) := \left\lbrace \begin{array}{ll}
       0,  & u_{i}(\tau) \leq \lambda_1 \\
        \frac{u_{i}(\tau)-\lambda_1}{\lambda_2 + M_{ii}}, & u_{i}(\tau) > \lambda_1
    \end{array}\right., \nonumber \\ 
    &\;\qquad\qquad\qquad\qquad\qquad\qquad i=1,\ldots,k.
\end{align}
This system can be interpreted as the dynamics of the neural network shown in Figure \ref{fig:net}. $\x_t$ is the input to the network and $\y_t$ is the output vector of unit activities. $\W$ and $-\bar \M$ are feedforward and lateral synaptic weight matrices. $-\alpha b_i$ is the synaptic weight to unit $i$ from an input unit with activity $1$. Finally, $g_i$ is a unit-dependent activation function. 

We note that previous AUNN derivations from similarity-based cost functions used subgradient descent \cite{hu2014SMF} , projected gradient descent \cite{pehlevan2018similarity}
or coordinate descent \cite{pehlevan2014NMF,pehlevan2015MDS,seung2017correlation} dynamics for this step of the algorithm. Our dynamics choice here is motivated by its generalization to an SNN, which will be presented in the Section \ref{SNN}.
\begin{figure}
\centering
\includegraphics[scale = 1]{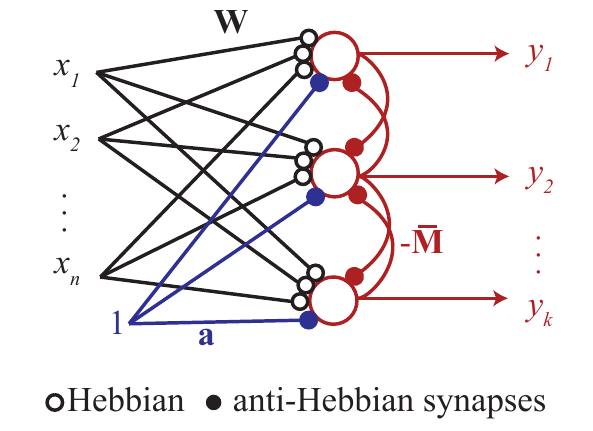}
\caption{The network architecture. Some lateral connections are not shown for better visibility. $a_i = -\alpha b_i$ for the AUNN algorithm, and $a_i = -\alpha b_i-\lambda_1$ for the SNN algorithm.}
\label{fig:net}
\end{figure}

\subsection{Updating synaptic weights with local learning rules}
The second step of the alternating optimization is to perform gradient updates in $\W$, $\b$, and $\M$ with fixed ${\bf y}_t$, which we write in component notation to expose their locality:
\begin{align}\label{syn_update}
    \Delta W_{ij} &= \eta \left(y_{t,i} x_{t,j}-W_{ij}\right), \;
    \Delta \bar M_{ij, i\neq j} = \eta\left(y_{t,i} y_{t,j}-\bar M_{ij}\right),  \nonumber \\
     \Delta M_{ii} &= \eta\left(y_{t,i}^2- M_{ii}\right), \;\;\;\;\;\;
    \Delta b_i = \eta\left(\alpha y_{t,i}-b_i \right),
\end{align}
where $\eta$ is a learning rate. $\W$ update is a Hebbian synaptic plasticity rule. $\bar \M$ and $\b$ updates are anti-Hebbian (because of the $-$ signs in corresponding terms in \eqref{rDyn}). $M_{ii}$ update changes the gain function of a neuron, and can be interpreted as a homeostatic plasticity rule.  

\section{Derivation of the Spiking NSM algorithm from the NSM cost function}\label{SNN}

Next, we derive an SNN algorithm with local learning rules, the Spiking NSM algorithm, from the NSM cost function \eqref{rNSM}. We do this by replacing the optimization algorithm for minimizing $h_t$ with a spike-based one. 

\subsection{Solving for outputs with an SNN}

The SNN of Tang, Lin and Davies \cite{tang2017sparse} minimizes $h_t$. We first describe the SNN and then cite a theorem for its convergence to the fixed point of the AUNN given in \eqref{rDyn}. Since the AUNN fixed point is the minimum of $h_t$, the SNN minimizes $h_t$.

Consider a network of $k$ integrate-and-fire units with the same architecture as in Figure \ref{fig:net}. We denote the $i^{\rm th}$ unit's membrane potential by $V_i(\tau)$, input current by $I_i(\tau)$, and $q^{\rm th}$ spike time by $\tau_{i,q}$. The units are perfect integrators, i.e. their subthreshold membrane potentials are given by:
\begin{align}\label{Vi}
    \frac {d V_i(\tau)}{d\tau} = I_i(\tau), \qquad V_i (0) = 0, \qquad i = 1,\ldots,k.
\end{align}
When $V_i(\tau)$ reaches a firing threshold $V^{{\rm th}}_i := \lambda_2 + M_{ii}$, the unit emits a spike and the membrane potential is set to 0. Synaptic input is defined as
\begin{align}\label{Ii}
    \frac{d I_i(\tau)}{d\tau} &= -I_i(\tau) + \left[{\bf W}{\bf x}_t\right]_i  -\alpha b_i -\lambda_1 - \left[\bar \M \bm{\sigma}(\tau)\right]_i, \nonumber \\ 
    I_i(0) &= \left[{\bf W}{\bf x}_t\right]_i  -\lambda_1-\alpha b_i, \qquad i=1,\ldots,k,
\end{align}
where $\x_t$ is the input to the network, $\sigma_i(\tau):= \sum_q \delta(\tau-\tau_{i,q})$ is the spike train of the $i^{\rm th}$ unit, $\W$ and $-\bar \M$ are feedforward and lateral synaptic weight matrices,  and $-\alpha b_i-\lambda_1$ is the synaptic weight to unit $i$ from an input unit with activity $1$.

Under mild assumptions, it can be shown that time-averaged spike trains converge to the fixed point of the AUNN defined in \eqref{rDyn} and therefore minimize $h_t$. More precisely, let's define
\begin{align}\label{yspike}
    \tilde y_{i}(\tau) := \frac{1}\tau \int_0^\tau d\tau'\, \sigma_i(\tau'), \qquad i = 1,\ldots,k.
\end{align}
%
%

\begin{thm}[Tang, Lin, Davies \cite{tang2017sparse}](Informal) Assume that the duration between a unit's consecutive spikes is not arbitrarily long but upper bounded, with the exception of the unit stopping spiking altogether after some time. Then, as $\tau\to \infty$, $\tilde y_{i}(\tau)$ converges to the value of $y_{t,i}$ at the fixed point of the dynamical system \eqref{rDyn}.
\end{thm}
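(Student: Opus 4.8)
The plan is to integrate the integrate-and-fire dynamics \eqref{Vi}--\eqref{Ii} over $[0,\tau]$, divide by $\tau$, and send $\tau\to\infty$, so that the long-time averages of the spike trains satisfy exactly the fixed-point equations of \eqref{rDyn}. I would begin by writing the spike count $N_i(\tau):=\int_0^\tau \sigma_i(\tau')\,d\tau'$, so that $\tilde y_i(\tau)=N_i(\tau)/\tau$, and integrating \eqref{Vi}. Because each spike resets $V_i$ by exactly $V_i^{\rm th}=\lambda_2+M_{ii}$, the reset-free integral of the current is reduced by $V_i^{\rm th}N_i(\tau)$, yielding
\begin{align*}
V_i(\tau) = \int_0^\tau I_i(\tau')\,d\tau' - \left(\lambda_2 + M_{ii}\right) N_i(\tau).
\end{align*}
Since $V_i(\tau)\in[0,V_i^{\rm th})$ remains bounded, dividing by $\tau$ shows that $\left(\lambda_2+M_{ii}\right)\tilde y_i(\tau)$ and the time-averaged current $\frac1\tau\int_0^\tau I_i(\tau')\,d\tau'$ share the same limit as $\tau\to\infty$.

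Next I would integrate the current dynamics \eqref{Ii}. Using $\int_0^\tau[\bar\M\bm{\sigma}(\tau')]_i\,d\tau'=\sum_j \bar M_{ij}\,N_j(\tau)$ and setting $c_i:=[\W\x_t]_i-\alpha b_i-\lambda_1$, integration gives
\begin{align*}
I_i(\tau)-I_i(0) = -\int_0^\tau I_i(\tau')\,d\tau' + c_i\,\tau - \sum_j \bar M_{ij}\,N_j(\tau).
\end{align*}
Dividing by $\tau$ expresses the time-averaged current through the $\tilde y_j(\tau)$, provided $I_i(\tau)/\tau\to0$. The crux of this step is thus boundedness of $I_i(\tau)$: under the bounded-inter-spike-interval hypothesis the incoming spike trains have bounded rate and bounded jump sizes, while the $-I_i$ leak provides exponential contraction, so this linear current ODE with bounded forcing cannot diverge.

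Combining the two relations in the limit yields, with $\tilde y_i^*:=\lim_{\tau\to\infty}\tilde y_i(\tau)$,
\begin{align*}
\left(\lambda_2+M_{ii}\right)\tilde y_i^* = [\W\x_t]_i - \alpha b_i - \lambda_1 - \sum_j \bar M_{ij}\,\tilde y_j^*.
\end{align*}
It remains to match this with \eqref{rDyn}. For a unit that keeps firing, whose rate is bounded away from zero by hypothesis, this is precisely the active branch $\tilde y_i^*=\left(u_i^*-\lambda_1\right)/\left(\lambda_2+M_{ii}\right)$ with $u_i^*=[\W\x_t]_i-\alpha b_i-\sum_j\bar M_{ij}\tilde y_j^*$; for a unit that stops spiking, $N_i$ is eventually constant, so $\tilde y_i^*=0$, and I would verify that the associated $u_i^*\le\lambda_1$, matching the silent branch $g_i(u_i^*)=0$. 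Hence $\tilde{\y}^*$ satisfies the stationarity conditions of \eqref{rDyn}; since $h_t$ is a nonnegative elastic net and therefore convex, its minimizer is unique, so $\tilde{\y}^*$ coincides with the fixed point.

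The main obstacle is not this algebra, which shows only that \emph{if} the running averages converge they converge to the correct value, but establishing convergence itself: that $\tilde y_i(\tau)$ and $\frac1\tau\int_0^\tau I_i(\tau')\,d\tau'$ admit limits and that $I_i(\tau)$ stays bounded. This is exactly what the bounded-inter-spike-interval assumption is designed to supply, fixing the asymptotic rate of each persistently firing unit and driving the rate of each eventually-silent unit to zero. Ruling out slow oscillations of the running averages, and self-consistently identifying which units belong to the active set, is the delicate part of the argument.
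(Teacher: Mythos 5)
Your proposal cannot be checked against the paper's own proof line by line, because the paper does not contain one: its ``proof'' is a two-sentence pointer to Tang, Lin and Davies \cite{tang2017sparse}, asserting that the results there extend easily to this setting. What you have written is, in substance, a reconstruction of the argument in that reference --- integrate the membrane equation \eqref{Vi} to trade spike counts for time-averaged current, integrate the current equation \eqref{Ii} to express that average through presynaptic spike counts, divide by $\tau$, and identify the limiting system with the fixed-point (KKT) conditions of \eqref{rDyn} --- so you are taking the same route the paper implicitly relies on, just made explicit. Your honest admission at the end is also accurate: the algebra only shows that \emph{if} the running averages converge, they converge to the right point; the existence of those limits and the boundedness of $I_i(\tau)$ are the genuine content of the theorem, and that part is exactly what \cite{tang2017sparse} establishes and what both the paper and your sketch ultimately defer to.

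Two technical slips are worth repairing. First, the claim $V_i(\tau)\in[0,V_i^{\rm th})$ is false in general: a perfect integrator receiving persistently negative current has no lower barrier, so an eventually-silent unit can have $V_i(\tau)\to-\infty$. This does not break the argument, but the two branches need asymmetric treatment: for persistently firing units, the bounded inter-spike-interval assumption together with boundedness of $I_i$ keeps $V_i$ bounded, so $V_i(\tau)/\tau\to 0$ and your displayed equation holds; for silent units you have only the one-sided bound $V_i(\tau)<V_i^{\rm th}$, which yields $\limsup_{\tau\to\infty} V_i(\tau)/\tau\le 0$ and hence precisely the inequality $u_i^*\le\lambda_1$ needed for the silent branch of $g_i$ --- the asymmetry is what makes the silent case come out as an inequality rather than an equality. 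Second, convexity of $h_t$ alone does not give uniqueness of the minimizer; you need strict convexity, i.e. $\M+\lambda_2\I\succ 0$. This holds when $\lambda_2>0$ (since $\M$, a running average of $\y_t\y_t^\top$, is positive semidefinite), but not automatically when $\lambda_2=0$, which is the setting of two of the paper's experiments; in that case ``the'' fixed point of \eqref{rDyn} is itself only well defined up to this same caveat, so the looseness is inherited from the informal theorem statement rather than introduced by you.
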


\begin{proof}
The results of \cite{tang2017sparse} can be easily extended to prove this result. See especially the Discussion section of \cite{tang2017sparse}.
\end{proof}

\subsection{Updating synaptic weights with local learning rules}

After the spiking dynamics converges, we update  $\W$, $\b$ and $\M$ as in \eqref{syn_update}, using the spiking estimate for $\y_t$ from \eqref{yspike}. Note that $M_{ii}$ updates are still interpreted as homeostatic plasticity, but this time they change the firing thresholds of units. 

The final Spiking NSM algorithm is summarized below.

\begin{algorithm}[H]
  \caption{Spiking NSM}
  \begin{algorithmic}\label{alg:sNSM}
  \renewcommand{\algorithmicrequire}{\textbf{Input:}}
  \REQUIRE Parameters $\alpha$, $\lambda_1$ and $\lambda_2$. Initial weights $\M \in\R^{k\times k}$, $\W \in\R^{k\times n}$ and $\b \in \R^k$.
   \FOR {$t = 1,2,3,\dots$}
   \STATE\texttt{// Spiking neural dynamics}
   \STATE Taking $\x_t$ as input, run the SNN defined by equations \eqref{Vi}, \eqref{Ii}, \eqref{yspike} until convergence.
   \STATE\texttt{// Synaptic and homeostatic plasticity}
   \STATE Update  $\W$, $\b$ and $\M$ as in \eqref{syn_update}, using the spiking estimate for $\y_t$ from \eqref{yspike}.
   \ENDFOR
  \end{algorithmic}
\end{algorithm}

\section{Experimental results}\label{Numerics}

In this section, we apply the Spiking NSM algorithm to various datasets for sparse encoding and feature extraction \cite{pehlevan2014NMF}, and manifold learning \cite{sengupta2018manifold}. Our purpose here is not to compare the performance of NSM with other unsupervised learning methods, this was done in \cite{bahroun2017neural,bahroun2017online,pehlevan2017blind}. We wish to demonstrate that Spiking NSM actually performs online NSM.

\subsection{Solving for outputs with spike-based dynamics}

Before attempting a learning task, we first checked whether the SNN defined by equations \eqref{Vi}, \eqref{Ii}, \eqref{yspike} indeed minimizes $h_t$, by simulating it with randomly chosen values for lateral connectivity and inputs. More precisely, we set $\alpha = 0.3$, $\lambda_1 = 0.3$, $\lambda_2 = 0.1$, drew $b_i$ from a uniform distribution in $[0,1]$ and $\left(\W\x\right)_i$ in $[0,5]$, and set $\M = \V\V^\top$, where $V_{ij}$ were drawn from a uniform distribution in $[0,1/\sqrt{k}]$. For each $k \in \lbrace 2,4,8,16,32,64,128,256\rbrace$, we repeated this procedure until we obtained 100 accepted parameter sets. A parameter set was accepted if the norm of the minimum of $h_t$, found by MATLAB's fmincon function, had an $l_2$-norm greater than 0.01. For each parameter choice, we simulated the corresponding SNN until $\tau=500$, using a first-order Euler method with a step-size $d\tau = 0.01$. 

\begin{figure}[h]
\centering
\includegraphics[scale = 1]{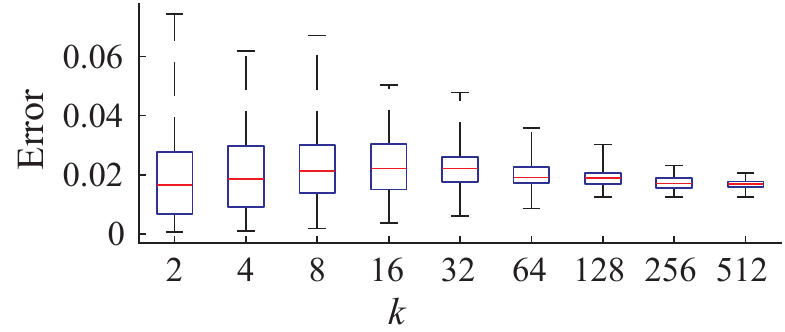}
\caption{Spike-based minimization of $h_t$. Red lines show median, box edges show 25th and 75th percentiles and whiskers show maxima and minima.}
\label{fig:spikemin}
\end{figure}

Figure \ref{fig:spikemin} shows that the SNN achieves the minimum of $h_t$ within a few percent. The error for a parameter set was measured by $\left\Vert \y - \hat \y \right\Vert_2/\left\Vert \hat \y \right\Vert_2$, where $\y$ is the result from the SNN and $\hat \y$ from fmincon. We observed that longer simulation times and finer step-sizes led to better performance. In the rest of this section, we used the simulation time and step size configuration used here.

\subsection{Sparse encoding and feature extraction}
Previously, NSM was shown to extract sparse features from data \cite{pehlevan2014NMF,pehlevan2017blind}. We trained Spiking NSM networks on two datasets to test this function: 1) the MNIST dataset of $6\times 10^4$ images of hand-written digits \cite{lecun1998mnist}, and 2) a dataset of $4\times 10^5$ 16-by-16 image patches sampled randomly from a set of whitened natural scenes \cite{olshausen1996emergence}. For both simulations, $\alpha = \lambda_2 = 0$, initial $\M$ was set to identity matrix and $\b$ to zero.  Learning rates were $10^{-3}$ for the first $10^4$ steps, $10^{-5}$ for the next $9\times10^4$ steps, and $0.5\times 10^{-5}$ later. For MNIST, $\lambda_1=0.5$, $k=196$ and initial $W_{ij}$ were drawn uniformly from $[0,1/14]$. For image patches, $\lambda_1=0$, $k=256$ and initial $W_{ij}$ were drawn from $\mathcal{N}(0,1/196)$. At each iteration, a randomly chosen datum was shown to the network. 

\begin{figure}[h]
\begin{minipage}[b]{.48\linewidth}
  \centering
  \centerline{MNIST}\medskip
  \centerline{\includegraphics[width=4.1cm]{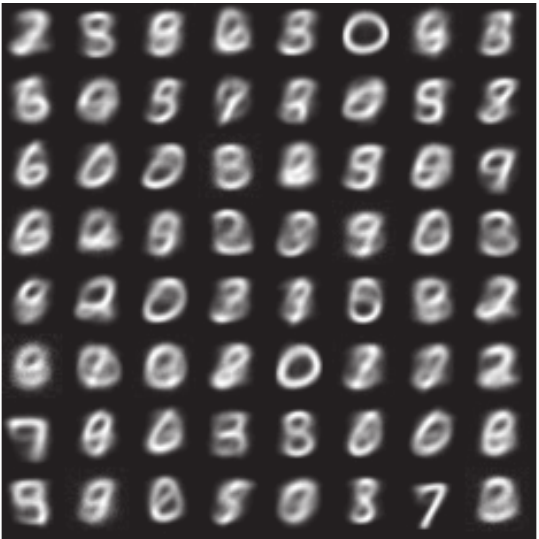}}
\end{minipage}
\hfill
\begin{minipage}[b]{0.48\linewidth}
  \centering
  \centerline{Image patches}\medskip
  \centerline{\includegraphics[width=4.1cm]{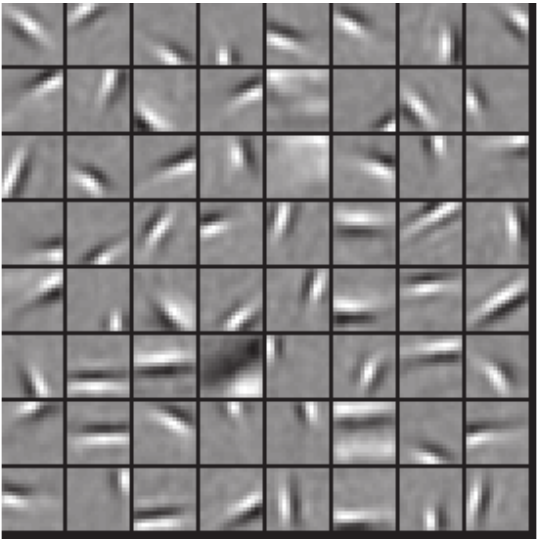}}
\end{minipage}
\centerline{(a) Examples of learned features}
\vskip 10pt
\begin{minipage}[b]{.48\linewidth}
  \centering
  \centerline{\includegraphics[width=4.1cm]{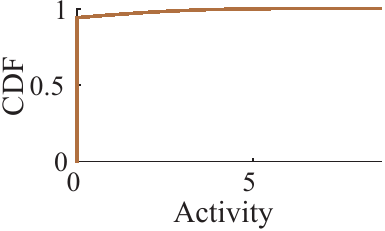}}
\end{minipage}
\hfill
\begin{minipage}[b]{0.48\linewidth}
  \centering
  \centerline{\includegraphics[width=4.1cm]{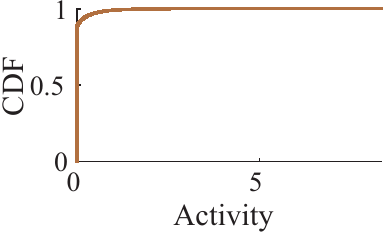}}
\end{minipage}
\centerline{(b) CDFs of output unit activity }
\caption{Spiking NSM extracts sparse features.}
\label{fig:MNIST}
\end{figure}

Figure \ref{fig:MNIST} shows the results of our simulations after $10^6$ iterations. Panel (a) displays 64 examples of learned features for each dataset, extracted from the rows of $\W$ as in \cite{pehlevan2014NMF}. Image patch features are oriented edges, as in sparse coding \cite{olshausen1996emergence}. Panel (b) displays cumulative distribution functions (CDFs) of the networks' time-averaged spiking output activities (Eq. \eqref{yspike}) calculated over the whole datasets. Activities are highly sparse.

\subsection{Manifold learning}
NSM learns a data manifold by learning features that tile the manifold \cite{sengupta2018manifold}. To test this function, we used a one-dimensional data manifold in a high dimensional space, composed of 71 576-by-768 images of a shoe rotated by 5\degree increments \cite{geusebroek2005amsterdam} (examples shown in Figure \ref{fig:shoe}(a)). After normalizing each image to unit norm, we trained a Spiking NSM with $k=100$ output units and $\lambda_1 = \lambda_2 = 0$, $\alpha = 0.8$. The $\alpha$ parameter sets the scale of local similarity neighborhoods \cite{sengupta2018manifold}. The rest of the simulation parameters matched the MNIST case.

\begin{figure}[h]
\begin{minipage}[b]{.48\linewidth}
  \centering
  \centerline{\includegraphics[width=4.1cm]{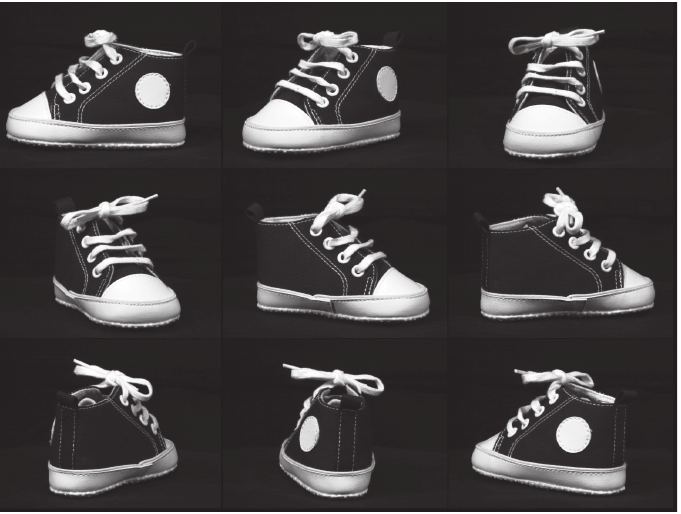}}
  \centerline{(a) Examples of input images}\medskip
\end{minipage}
\hfill
\begin{minipage}[b]{0.48\linewidth}
  \centering
  \centerline{\includegraphics[width=4.1cm]{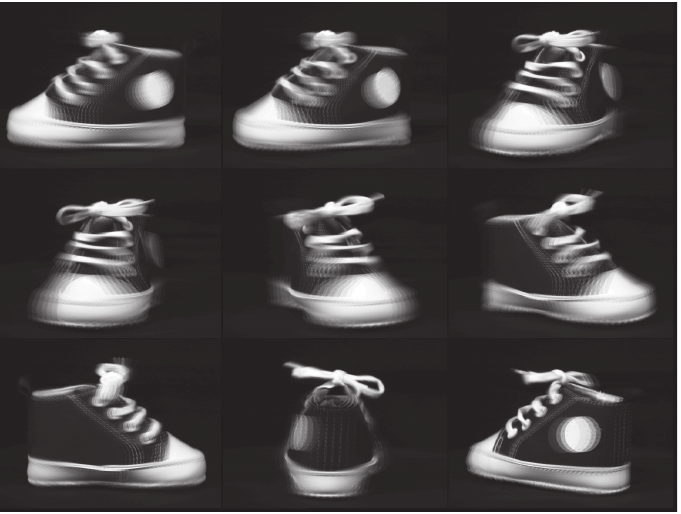}}
  \centerline{(b) Examples of learned features}\medskip
\end{minipage}
\begin{minipage}[b]{\linewidth}
  \centering
  \centerline{\includegraphics[width=8.0cm]{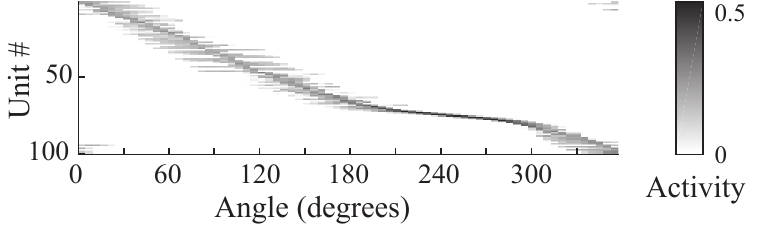}}
  \centerline{(c) Ordered unit responses as a function of the rotation angle}\medskip
\end{minipage}
\caption{Spiking NSM tiles data manifolds}
\label{fig:shoe}
%
\end{figure}

Figure \ref{fig:shoe} shows our results after $1.8\times10^4$ iterations.  Panel (b) displays 6 example learned features, which are localized to the vicinity of particular shoe rotation angles.  Panel (c) shows the time-averaged spiking activity of output units (Eq. \eqref{yspike}) as a function of the shoe's rotation angle. As promised, the units tile the data manifold.

\section{Conclusion}\label{DC}

We presented a principled derivation of the Spiking NSM algorithm, which exhibits local learning rules, from the NSM cost function \eqref{rNSM}. We applied the algorithm to various datasets and interpreted its action as sparse feature extraction and encoding, or manifold learning,  based on analytical analyses of the NSM cost function \cite{pehlevan2017blind,sengupta2018manifold}. 

With the advent of new spiking neuromorphic hardware \cite{davies2018loihi}, the need for new SNN algorithms with local learning rules is increasing. We expect the principled approach presented in this paper to be useful in this quest.


\end{document}